\documentclass{article}

\usepackage{arxiv}

\usepackage[utf8]{inputenc} 
\usepackage[T1]{fontenc}    
\usepackage{hyperref}       
\usepackage{url}            
\usepackage{booktabs}       
\usepackage{amsfonts}       
\usepackage{nicefrac}       
\usepackage{microtype}      
\usepackage{lipsum}		
\usepackage[numbers,sort&compress]{natbib}
\usepackage{doi}

\usepackage{hyperref}
\usepackage{url}
\usepackage{graphicx}
\usepackage{amsmath}
\usepackage{amssymb}
\usepackage{amsthm}
\usepackage{booktabs}
\usepackage{multirow}
\usepackage[table]{xcolor}
\usepackage{adjustbox}
\usepackage{tabularx}
\usepackage{pifont}
\usepackage{tcolorbox}

\definecolor{CRCgreen}{RGB}{222,242,222}
\definecolor{Riskred}{RGB}{250,224,224}
\definecolor{Neargray}{RGB}{235,235,235}

\newcommand{\riskvalue}[2]{%
  #1\,{\scriptsize$\pm$#2}%
}

\newcommand{\rcok}[2]{%
  \cellcolor{CRCgreen}%
  \riskvalue{#1}{#2}\,\ding{51}%
}

\newcommand{\rcnear}[2]{%
  \cellcolor{Neargray}%
  \riskvalue{#1}{#2}$^\dagger$%
}

\newcommand{\rcbad}[2]{%
  \cellcolor{Riskred}%
  \riskvalue{#1}{#2}\,\ding{55}%
}

\newcommand{\rcplain}[2]{%
  \riskvalue{#1}{#2}%
}

\newcommand{\rcabstain}{%
  \textit{abst.}$^\ddagger$%
}

\definecolor{STLBlue}{RGB}{56,126,184}
\definecolor{SpaPurple}{RGB}{135,100,175}

\newtheorem{theorem}{Theorem}

\title{\Large\normalfont \textbf{SCP-NL2TL: Selective Conformal Prediction with Semantic Verification for Natural Language to Temporal Logic Specifications
}}

\date{} 					

\author{
Yixuan Wang$^{1}$,
Licheng Luo$^{1}$,
Yu Fu$^{1}$,
Kaidi Xu$^{2}$,
Yue Dong$^{1}$,
Mingyu Cai$^{1}$
\\[0.6em]
$^{1}$University of California, Riverside,
Riverside, CA, USA
\\
$^{2}$City University of Hong Kong,
Hong Kong SAR, China
\\[0.4em]
\texttt{
ywang1457@ucr.edu,
lichengl@ucr.edu,
yfu093@ucr.edu
}
\\
\texttt{
kaidixu at cityu.edu.hk,
yue.dong@ucr.edu,
mingyuc@ucr.edu
}
}

\renewcommand{\shorttitle}{\textit{arXiv} Template}

\hypersetup{
pdftitle={A template for the arxiv style},
pdfsubject={q-bio.NC, q-bio.QM},
pdfauthor={David S.~Hippocampus, Elias D.~Striatum},
pdfkeywords={First keyword, Second keyword, More},
}

\begin{document}
\maketitle

\begin{abstract}
Translating natural language instructions into machine-interpretable formal specifications enables robots and autonomous systems to plan, reason, and formally verify their behavior. However, existing translation models typically generate a specification for every input, even when the result is unreliable or fails to capture the user's intent, creating risks in safety-critical applications. Inspired by selective conformal prediction, we propose a selective translation framework that not only generates formal specifications but also determines when they can be trusted. Reliability is scored by two complementary black-box signals, the fidelity of the specification back-translated into natural language and the dispersion of repeated translations under exact semantic equivalence, which fail on different errors and jointly separate incorrect translations more sharply than either alone. Conformal risk control calibrates this score into a decision that accepts a specification or abstains, with a distribution-free bound on the rate at which incorrect specifications are accepted for execution, and a conformal anomaly detector on instruction embeddings screens out-of-distribution inputs before any translation is attempted. The proposed framework is general across formal specification languages, with experiments on Signal Temporal Logic (STL), Linear Temporal Logic (LTL), and geometric Spatio-Temporal Logic (SpaTiaL) demonstrating improved translation reliability, robustness under the evaluated cross-tier shifts, and effective uncertainty-aware abstention. This work establishes a foundation for trustworthy natural language interfaces by enabling AI systems to recognize when generated specifications may not be reliable. Project materials are available at
\href{https://sites.google.com/ucr.edu/scpnl2tl?usp=sharing}
{\texttt{sites.google.com/ucr.edu/scpnl2tl}}.
\end{abstract}

\keywords{Natural Language to Temporal Logic Translation \and Conformal Risk Control \and Selective Prediction}

\section{Introduction}
Translating natural language instructions into machine-interpretable formal specifications enables robots and autonomous systems to plan, reason, and formally verify their behavior~\cite{xu2021adaptive, roy2023learning,chakraborti2024interactive,zhao2019probabilistic}. Formal languages such as Linear Temporal Logic (LTL), Signal Temporal Logic (STL), and geometric Spatio-Temporal Logic (SpaTiaL) provide representations of temporal and spatial requirements ~\cite{pnueli1977temporal,maler2004monitoring,pek2023spatial}, but writing them requires formal methods expertise. Recent natural language to temporal logic (NL2TL) systems have made this process more accessible using semantic parsing, sequence models, and, more recently, pretrained language models with structured intermediate representations~\cite{he2022deepstl,chen2023nl2tl,luo2025nl2spatial}. Although these methods substantially improve translation accuracy and structural validity, they invariably generate a specification for every input.

Always producing a specification is undesirable in safety-critical applications. A formula may be syntactically valid yet semantically incorrect by assigning an incorrect predicate, argument, temporal operator, interval, logical scope, or spatial relation, causing downstream planners or verifiers to execute behavior that does not reflect the user's intent. A practical NL2TL system should therefore determine not only what specification to generate, but also whether it is sufficiently reliable to be executed. Standard evaluation metrics offer no such judgment, as exact-match accuracy characterizes average benchmark performance rather than the trustworthiness of an individual translation~\cite{stengel2023calibrated, chen2023unified,zhong2020semantic,somov2025confidence}.
Conformal prediction (CP) supplies the statistical machinery for such a judgment. From any heuristic score and a set of calibration examples, it produces a prediction set that covers the true label at a user-specified rate, requiring only that calibration and test data be exchangeable ~\cite{vovk2005algorithmic}. The guarantee, however, is stated over sets, and a planner executes one formula rather than weighing several~\cite{geifman2019selectivenet, lee2024selective}; a set of candidate specifications has nothing to hand it. Recent work has redirected this machinery from covering an answer toward declining to give one. Conformal risk control generalizes the coverage target to any bounded monotone loss ~\cite{ICLR2024_f3549ef9}, and selective conformal procedures add a second decision ahead of the first, testing whether the calibration data speak for a given test point and withholding a response when they do not ~\cite{wang-etal-2025-sconu}. Across this line, abstention becomes an outcome the procedure is calibrated to produce rather than an admission that it failed.

We propose a selective layer to any translator that either returns or withholds its generated formula, calibrating the decision so the rate of incorrect specifications reaching execution satisfies a user-defined risk budget. Unlike prior conformal methods that calibrate prediction sets using model confidence, our framework treats the translator as a black box and directly controls the risk of the single specification executed by the planner. Instead of relying on token likelihoods, we score translations using two complementary semantic signals: back-translation, which measures agreement with the instruction, and repeated sampling, which measures semantic consistency across translations. An instruction-level conformal test further rejects out-of-distribution inputs before translation. Across STL, LTL, and SpaTiaL, our method consistently satisfies the target risk budget, while coverage-calibrated baselines do not, and the instruction-level filter further improves robustness under distribution shift.
The core contributions can be summarized as follows:

\begin{itemize}
    \item To the best of our knowledge, this is the first selective translation framework for NL2TL that augments any translator with an accept-or-abstain mechanism and bounds the rate at which incorrect temporal logic specifications are accepted for execution.
    \item We introduce a semantic consistency score that combines back-translation fidelity and semantic agreement across repeated translations, improving discrimination between correct and incorrect specifications.
    \item We incorporate an instruction-level conformal anomaly detector before calibration to identify out-of-distribution inputs, bound in-distribution deferrals, and improve robustness under distribution shift.
\end{itemize}

\noindent\textbf{Related Work.}
Translating natural language into formal task specifications has been studied using grammar-based methods, semantic parsing, and sequence models for robotics and cyber-physical systems~\cite{raman2013sorry,gopalan2018sequence,wang2021learning,he2022deepstl}. Recent approaches leverage pretrained language models, structured intermediate representations, retrieval augmentation, model-checker feedback, and grammar-constrained decoding to improve translation accuracy and structural validity~\cite{chen2023nl2tl,cosler2023nl2spec,mendoza2024translating,fang2025enhancing,english2025grammar,luo2025nl2spatial,fuggitti2023nl2ltl}. These methods focus on generating or refining executable specifications, but they always return a formula regardless of its reliability. ConformalNL2LTL attains a guaranteed translation success rate by building the formula through conformal QA steps that escalate uncertain decisions to a stronger model or the user~\cite{sundarsingh2025conformalnl2ltl}. Our work is complementary: instead of modifying the translator, we develop a model-agnostic selection layer that determines whether a generated specification should be accepted or withheld.

Conformal prediction has recently been applied to language models for generation sets, prompt selection, API-only uncertainty estimation, response validity, factuality, and selective generation~\cite{su2024api,wang2024conu,quach2024conformal,mohri2024language,zollo2024prompt,cherian2024large}. Related work has also studied semantic uncertainty~\cite{farquhar2024detecting}, conformal procedures under non-exchangeability and distribution shift~\cite{farinhas2024non,tibshirani2019conformal,barber2023conformal},  and conformal significance testing for uncertainty outlier detection~\cite{wang-etal-2025-sconu}. A parallel line acquires specifications from system behavior rather than language, learning STL formulas through differentiable robustness and neural structures~\cite{leung2023backpropagation,li2024tlinet,li2023learning,luo2026differentiable}, with conformal prediction certifying the inferred formulas~\cite{li2025conformal,wang2026conformalized}. Conformal risk control further extends conformal calibration from coverage guarantees to bounded monotone losses~\cite{ICLR2024_f3549ef9}. The language-model frameworks calibrate over a prediction set and derive their nonconformity from model confidence, token likelihoods, or sampled responses, none of which is available for a black-box translator whose output is a single executable formula. Conformal prediction has also entered language-instructed robot planning, calibrating when an LLM planner should proceed, seek help, or escalate~\cite{wang2023conformal,wang2024probabilistically,ren2023robots,lindemann2023safe}.

Selective conformal procedures add a decision ahead of the conformal one. SConU tests each incoming sample against the uncertainty distribution of the calibration set and declines to answer when the evidence for exchangeability is insufficient ~\cite{wang-etal-2025-sconu}, building on conformal outlier testing ~\cite{bates2023testing,jin2023selection} and on rank-based anomaly detection under exchangeability ~\cite{laxhammar2015inductive}. A related line pursues conditional rather than marginal risk guarantees through two-stage calibration, at the cost of a modified exchangeability requirement ~\cite{xu2025selective}. Nearest neighbours in embedding space serve as an out-of-distribution statistic in the detection literature independently of conformal calibration ~\cite{pmlr-v162-sun22d}. We place such a statistic inside the admission gate, computed from the instruction alone so that screening precedes translation, and calibrate what follows it on the rate at which incorrect specifications are accepted.

\section{Background and Problem}
\label{sec: preliminaries}

\textbf{Formal Logic Languages.}
Temporal Logic (TL) specifications as formal languages provide mathematically precise descriptions of desired system behaviors, enabling automated planning and formal verification for autonomous and cyber-physical systems ~\cite{pnueli1977temporal,baier2008principles,belta2017formal}. Unlike natural language, which is inherently ambiguous, a formal specification has well-defined syntax and semantics, allowing its correctness to be rigorously analyzed and verified.
Let $\Phi$ denote the space of well-formed formulas in a target formal language. Although numerous formal specification languages have been proposed, including LTL~\cite{pnueli1977temporal}, STL~\cite{maler2004monitoring}, and  SpaTiaL~\cite{pek2023spatial}, they share a common compositional structure. A specification is recursively constructed from \emph{atomic predicates}, \emph{Boolean operators}, and \emph{domain-specific operators}, such as temporal or spatial modalities:
\begin{equation}
\varphi
::=
\mu
\mid
\neg\varphi
\mid
\varphi_1\land\varphi_2
\mid
\varphi_1\lor\varphi_2
\mid
\mathcal{O}(\varphi),
\label{eq:general_grammar}
\end{equation}
where $\mu$ is an atomic predicate and $\mathcal{O}$ denotes one or more logic-specific operators. For example, LTL employs temporal operators such as \textit{Eventually} ($\mathbf{F}$), \textit{Always} ($\mathbf{G}$), and \textit{Until} ($\mathbf{U}$); STL augments these operators with bounded time intervals over continuous-valued signals; and SpaTiaL further introduces geometric predicates and spatial operators. Each logic pairs this grammar with a formal semantics specifying when a system trajectory satisfies a specification, which is what makes generated formulas executable and verifiable downstream.



\noindent\textbf{Natural Language to Formal Specification.}

Given a natural language instruction $u\in\mathcal{U}$, NL2TL aims to generate a formal specification $\hat{\varphi}=f_\theta(u)\in\Phi$. Most approaches learn the conditional distribution $p_\theta(\varphi|u)$ and predict the most likely specification~\cite{chen2023nl2tl,cosler2023nl2spec,mendoza2024translating,luo2025nl2spatial}, but the predicted formula may be syntactically valid while failing to preserve the semantic intent of the instruction, and likelihood under $p_\theta$ offers no measure of this semantic correctness~\cite{duan2024shifting}.

\noindent\textbf{Conformal Prediction (CP).}
CP provides a model-agnostic framework for quantifying prediction reliability with finite-sample, distribution-free guarantees~\cite{vovk2005algorithmic}. Given a calibration set and a nonconformity score $s(x,y)$, CP computes a threshold from calibration scores and constructs a prediction set
$\mathcal{C}_\alpha(x)=\{y:s(x,y)\le\hat q\},$
which satisfies
$\Pr\!\left(Y\in\mathcal{C}_\alpha(X)\right)\ge 1-\alpha$
under exchangeability. While this guarantee ensures the correct label is contained in the prediction set with high probability, it is not directly suitable for NL2TL, where downstream planners typically require a single executable formal specification rather than a set of candidates. This limitation motivates conformal methods tailored to selective and reliable formal specification generation.


\noindent\textbf{Problem Formulation.}
Let $x\in\mathcal{X}$ be a natural-language instruction and let $\Phi$ be the space of well-formed formulas in a target specification language, as in the preliminaries; the formulation applies to any logic generated by the grammar in Eq.~\ref{eq:general_grammar}, with STL, LTL, and SpaTiaL serving as our experimental instances. A translator $\hat{\varphi} = f(x) \in \Phi$ is treated as a black box: the framework uses only the final pair $(x,\hat{\varphi})$, with no access to parameters, likelihoods, or internal states. The translator may be of any construction~\cite{luo2025nl2spatial,liu2022lang2ltl,chen2023nl2tl}; only the instruction and the final formula enter the framework. We augment $f$ with a selection function $ g(x,\hat{\varphi}) \in \{0,1\}$, where $g=1$ indicates that $\hat{\varphi}$ is returned and $g=0$ indicates abstention. Let $\varphi^\star$ denote the reference formula and define the translation error as $z(x,\hat{\varphi},\varphi^\star)
    = \mathbf{1}\!\left[\hat{\varphi}\not\equiv\varphi^\star\right]$,
where $\equiv$ is the semantic-equivalence criterion of the target
benchmark. The selective translator is evaluated by its joint risk
and coverage,
\begin{equation}
    \mathcal{R}_{\mathrm{joint}}
    = \mathbf{E}\!\left[\, g(x,\hat{\varphi})\cdot
      z(x,\hat{\varphi},\varphi^\star) \,\right],
    \qquad
    \mathcal{C}
    = \mathbf{P}\!\left(g(x,\hat{\varphi})=1\right).
    \label{eq:joint_risk}
\end{equation}
The joint risk is the probability that a translation is both accepted and incorrect, taken over the full input stream rather than over the accepted subset; an abstention contributes zero regardless of the underlying error.

\noindent\textbf{Challenges and Motivation.}
Designing a conformal framework for NL2TL differs fundamentally from standard classification. First, the output space $\Phi$ is effectively unbounded, making prediction sets impractical while downstream planners require a single executable specification or abstention. Second, the objective is selective risk rather than coverage: standard conformal prediction guarantees that the correct specification belongs to a prediction set but does not control the probability that an accepted translation is incorrect, nor the joint rate at which incorrect translations reach execution. Third, conventional nonconformity scores depend on model confidences or token likelihoods, which are unavailable for black-box translators and often poorly aligned with semantic correctness. Therefore, reliable NL2TL requires a framework that (i) performs accept-or-abstain decisions instead of prediction-set construction, (ii) calibrates acceptance risk rather than set coverage, and (iii) uses scores computable solely from the instruction and the generated specification.

\section{Methodology}
\label{sec:method}
\begin{figure}[t]
\centering
\includegraphics[width=0.70\columnwidth]{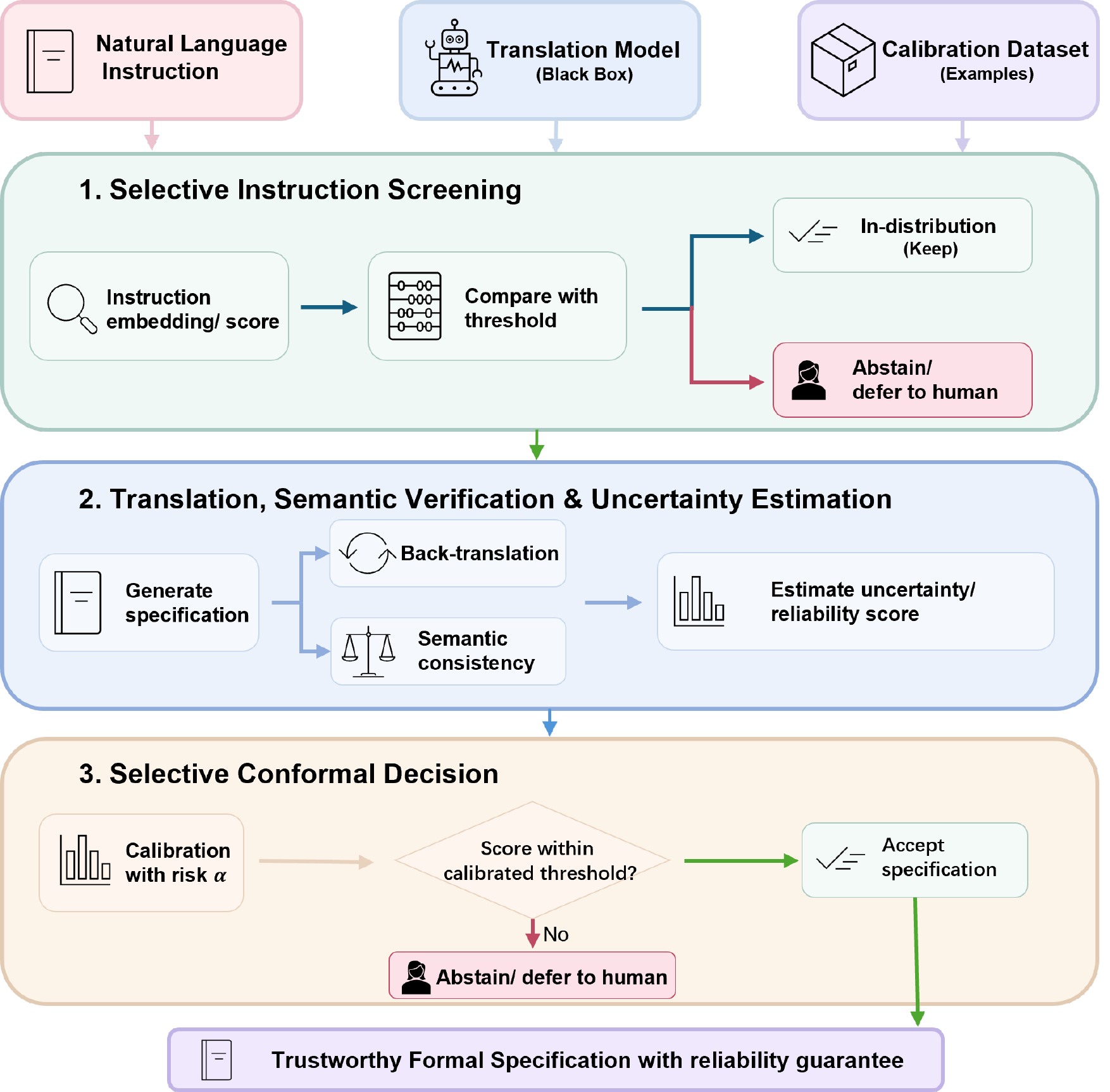}
\caption{Overview of the selective translation framework: instructions are screened before translation (level~$\delta$), scored by two complementary signals after it, and returned only when a threshold calibrated to the joint-risk budget~$\alpha$ admits the evidence.}
\label{fig:framework}
\vspace{-10pt}
\end{figure}

Conformal risk control (CRC)~\cite{ICLR2024_f3549ef9} extends conformal calibration from coverage guarantees to arbitrary bounded monotone losses by selecting the largest acceptance threshold whose expected loss remains below a prescribed budget under exchangeability. However, CRC provides only the calibration mechanism, not the scoring function, selection criterion, or robustness to distribution shift. Building on CRC, we propose a black-box conformal selection framework that augments any NL2TL translator with an accept-or-abstain decision layer: an instruction-level screen, two semantic scores, and a calibrated threshold (Fig.~\ref{fig:framework}). 
The objective is to maximize acceptance subject to a prescribed budget $\alpha\in(0,1)$ on the joint risk of Eq.~\ref{eq:joint_risk}:
\begin{equation}
\max_{g}\;\mathcal{C}
\qquad\text{s.t.}\qquad
\mathcal{R}_{\mathrm{joint}}\le\alpha .
\label{eq:objective}
\end{equation}
\subsection{Semantic Consistency Scoring}
\label{sec:score}

The decision function in Eq.~\ref{eq:objective} requires a scalar nonconformity score computed from the instruction--translation pair $(x,\hat\varphi)$ that assigns higher values to less reliable translations. Our default score is based on back-translation. A back-translator $B$ (an LLM prompted with the target logic) converts $\hat\varphi$ into natural language, $\hat u=B(\hat\varphi)$, and an LLM judge evaluates its semantic consistency with the original instruction across four dimensions: logical structure, temporal operators, time constraints, and overall meaning. The average agreement $A(x,\hat u)\in[0,1]$ defines the nonconformity score
\begin{equation}
S_{\mathrm{bt}}(x,\hat\varphi)
= 1-A\left(x,B(\hat\varphi)\right).
\label{eq:sbt}
\end{equation}
Invalid or unparsable formulas receive the maximum score, $S_{\mathrm{bt}}=1$, enforcing syntactic correctness before semantic evaluation. This design isolates logic-specific knowledge within the back-translator, while the judge operates entirely in natural language, making the scoring pipeline applicable to any target logic generated by Eq.~\ref{eq:general_grammar}. The approach combines the round-trip principle from machine translation~\cite{moon2020revisiting} with the LLM-as-judge paradigm~\cite{zheng2023judging}.
Because back-translation inspects only the final output, a complementary score measures the stability of repeated stochastic translations. Given $k$ samples, translator uncertainty is measured by
\begin{equation}
    S_{\mathrm{sc}}(x)
    = 1 - \frac{1}{k}\,
      \max_{1\le j\le k}\;
      \Bigl|\bigl\{\, m :
      \hat\varphi^{(m)} \simeq \hat\varphi^{(j)} \,\bigr\}\Bigr|,
    \label{eq:ssc}
\end{equation}
where the maximum is the size of the largest cluster of mutually equivalent samples, $\simeq$ denotes semantic equivalence under logic-specific canonicalization, and invalid formulas are equivalent only to themselves. This extends self-consistency~\cite{wang2023selfconsistency} to formal specifications, where equivalence can be checked exactly~\cite{farquhar2024detecting}. Since $S_{\mathrm{bt}}$ detects semantic mismatches while $S_{\mathrm{sc}}$ captures translation uncertainty, their equal-weight fusion
\begin{equation}
S_{\mathrm{fu}}
= \tfrac{1}{2}\left(S_{\mathrm{bt}}+S_{\mathrm{sc}}\right)
\label{eq:fusion}
\end{equation}
provides the default reliability score. When repeated sampling is unavailable, such as for deterministic or offline translators, $S_{\mathrm{bt}}$ alone can be used without changing the subsequent calibration procedure, which accepts any score in $[0,1]$.

\subsection{Risk-Calibrated Selection}
\label{sec:verification}

The score orders translations by suspicion but does not say where acceptance should stop; that boundary must come from data. A threshold is calibrated on translations whose outcomes are known,
\begin{align}
    \mathcal{D}_{\mathrm{cal}}
    &= \{(x_i,\hat\varphi_i,\varphi_i^\star)\}_{i=1}^{n},
    \label{eq:calset}\\
    S_i &= S(x_i,\hat\varphi_i),
    \qquad
    z_i = \mathbf{1}\!\left[\hat\varphi_i\not\equiv\varphi_i^\star\right],
    \label{eq:calquant}
\end{align}
where $S$ is the deployed consistency score, either the fusion $S_{\mathrm{fu}}$ or the output-only $S_{\mathrm{bt}}$. A candidate threshold $\tau$ accepts every translation scoring at or below it, and its empirical joint risk on the calibration set is
\begin{equation}
    \widehat{\mathcal{R}}(\tau)
    = \frac{1}{n}\sum_{i=1}^{n}
      \mathbf{1}\!\left[S_i\le\tau\right] z_i .
    \label{eq:empirical_joint}
\end{equation}

Every calibration sample contributes to this sum: an abstention counts zero, an accepted error counts one. The quantity is therefore an average over the full sample rather than over the accepted subset, which is what permits the direct application of conformal risk control. Following the calibration rule of ~\citet{ICLR2024_f3549ef9}, the threshold is
\begin{equation}
    \hat\tau
    = \max\Bigl\{\tau\in\mathcal{T} :
      \tfrac{n}{n+1}\,\widehat{\mathcal{R}}(\tau)
      + \tfrac{1}{n+1} \le \alpha \Bigr\},
    \label{eq:crc_threshold}
\end{equation}
where $\mathcal{T}$ is the set of distinct calibration scores together with $-\infty$, and ties are resolved by admitting all samples that share a score. The correction term accounts for the unseen test point and is what turns an empirical constraint into a population guarantee. At test time,
\begin{equation}
    g(x,\hat\varphi)
    = \mathbf{1}\!\left[S(x,\hat\varphi)\le\hat\tau\right].
    \label{eq:decision}
\end{equation}

\begin{theorem}[Joint risk control]
\label{thm:crc}
Fix a calibration group. If the calibration pairs of this group and the test pair are exchangeable and $\hat\tau$ is chosen by Eq.~\ref{eq:crc_threshold} on this group's calibration scores, then the decision rule of Eq.~\ref{eq:decision} satisfies, for test
inputs of the same group,
\[
\mathcal{R}_{\mathrm{joint}}
= \mathbf{E}\!\left[\,g(X,\hat\varphi)\cdot Z\,\right]
\le \alpha .
\]
\end{theorem}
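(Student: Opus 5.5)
The plan is to reduce Theorem~\ref{thm:crc} to the standard conformal risk control argument of \citet{ICLR2024_f3549ef9}. We apply it to the loss $L_i(\tau)=\mathbf{1}[S_i\le\tau]\,z_i\in\{0,1\}$, indexed by the test point $i=n+1$ and the calibration points $i=1,\dots,n$. This loss is bounded by $B=1$ and is monotone \emph{nondecreasing} in $\tau$. CRC is usually stated for losses nonincreasing in the tuning parameter, so the first step is to reparametrize by $\lambda=-\tau$, or equivalently to redo the argument with the inequalities flipped. Because the rule in Eq.~\ref{eq:crc_threshold} takes the \emph{largest} admissible threshold, it is exactly the CRC choice after this flip. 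Everything is conditioned on the fixed group, so exchangeability is assumed only within it.

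The core is the oracle (full-data) threshold. Let $\widehat{\mathcal{R}}_{n+1}(\tau)=\frac{1}{n+1}\sum_{i=1}^{n+1}L_i(\tau)$, and define $\tau'=\max\{\tau\in\mathcal{T}':\widehat{\mathcal{R}}_{n+1}(\tau)\le\alpha\}$, where $\mathcal{T}'$ adds the test score to the grid and always contains $-\infty$. At $-\infty$ nothing is accepted, so the set is nonempty. The argument then has three steps.

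\textbf{Step 1 (symmetry).} $\tau'$ is a symmetric function of the $n+1$ exchangeable pairs $(S_i,z_i)$. Hence $\mathbf{E}[L_{n+1}(\tau')]=\mathbf{E}[\widehat{\mathcal{R}}_{n+1}(\tau')]\le\alpha$.

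\textbf{Step 2 (domination).} Since $L_{n+1}(\tau)\le 1$,
\[
\widehat{\mathcal{R}}_{n+1}(\tau)\le\tfrac{n}{n+1}\widehat{\mathcal{R}}(\tau)+\tfrac{1}{n+1}
\]
for every $\tau$. So any $\tau$ feasible for Eq.~\ref{eq:crc_threshold} is feasible for the oracle problem. Because $\widehat{\mathcal{R}}_{n+1}$ is nondecreasing, its feasible set is an initial segment, and therefore $\hat\tau\le\tau'$.

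\textbf{Step 3 (monotonicity).} $L_{n+1}$ is nondecreasing, so $L_{n+1}(\hat\tau)\le L_{n+1}(\tau')$. Taking expectations gives $\mathbf{E}[g(X,\hat\varphi)Z]=\mathbf{E}[L_{n+1}(\hat\tau)]\le\alpha$.

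The main obstacle is the grid and tie bookkeeping in Step~2. $\hat\tau$ ranges over calibration scores plus $-\infty$, while $\tau'$ ranges over a grid that also contains $S_{n+1}$. I would handle this by noting that $\widehat{\mathcal{R}}_{n+1}$ and $\widehat{\mathcal{R}}$ are right-continuous step functions. The maximum feasible $\tau$ over the real line can then be taken at a jump point, and comparing sup-feasible values over $\mathbb{R}$ removes any dependence on which grid is used. The convention "admit all ties" is exactly the right-continuity of $\mathbf{1}[S\le\tau]$, so it needs no separate treatment.

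Two further points need checking. If $\alpha<1/(n+1)$, only $-\infty$ is feasible; the rule always abstains and the bound holds trivially. Exchangeability is needed for the pairs $(S_i,z_i)$, which follows from exchangeability of $(x_i,\hat\varphi_i,\varphi_i^\star)$ when the scorer is a fixed function. Stochastic back-translation or judge randomness must be drawn independently per point so that it does not break exchangeability; I would state this as an implicit assumption.
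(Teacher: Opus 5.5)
Your proof is correct, but it follows a different route from the paper's complete proof in the appendix. What you give is the oracle-threshold proof of \citet{ICLR2024_f3549ef9}, which is what the paper's one-line main-text proof cites. You build a symmetric full-data threshold $\tau'$, obtain $\mathbf{E}[L_{n+1}(\tau')]\le\alpha$ by exchangeability, and then use $\hat\tau\le\tau'$ together with monotonicity of the loss.

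The appendix never forms an oracle threshold. It treats each index $j$ in turn as the test point, computes the leave-one-out threshold $\hat\tau^{(-j)}$ from the other $n$ pairs, and sets $A_j=Z_j\mathbf{1}[S_j\le\hat\tau^{(-j)}]$. It then proves the deterministic count $\sum_{j=1}^{n+1}A_j\le K_\alpha+1$, where $K_\alpha=\max\{k:(k+1)/(n+1)\le\alpha\}$. The reason is that any $j$ with $A_j=1$ has at most $K_\alpha+1$ full-sample errors at or below its score, so it lies among the first $K_\alpha+1$ errors in score order. Since exchangeability makes all $\mathbf{E}[A_j]$ equal, this gives $\mathbf{E}[A_{n+1}]\le(K_\alpha+1)/(n+1)\le\alpha$.

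Your route is modular and carries over verbatim to any bounded monotone loss. The paper's counting argument relies on the $0/1$ structure of the joint-error loss. In exchange, ties are absorbed by the counting function alone, and the $1/(n+1)$ resolution behind the feasibility floor $\alpha_\ell$ is explicit. If you track integrality, both arguments yield the same bound $\lfloor\alpha(n+1)\rfloor/(n+1)$.

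The grid-and-tie issue you flag as the main obstacle is not really an obstacle. Since $\mathcal{T}\subseteq\mathcal{T}'$, $\hat\tau$ is itself a feasible point of the oracle problem, so $\hat\tau\le\tau'$ follows directly. No right-continuity argument is needed. In fact, a largest feasible real $\tau$ would not exist anyway, because the feasible set of a right-continuous step function is a half-line open on the right (or all of $\mathbb{R}$). Your caveat that the LLM scorer's randomness must be drawn independently for each point is a fair addition; the paper simply assumes that the $(S_i,Z_i)$ pairs are exchangeable.
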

\begin{proof}
The loss $\ell_\tau = \mathbf{1}[S\le\tau]\,z$ is bounded in $[0,1]$ and non-increasing as $\tau$ decreases, since lowering the threshold can only remove accepted samples. The claim is the conformal risk control theorem of ~\citet{ICLR2024_f3549ef9} applied to this loss.
\end{proof}
The guarantee in Theorem~\ref{thm:crc} holds in expectation over the calibration and test samples. Consequently, a threshold that fully utilizes the risk budget will achieve expected risk close to $\alpha$, while individual realizations may exceed $\alpha$.
Feasibility depends on both the calibration size and score distribution. Since the empirical risk is evaluated at resolution $\tfrac{1}{n+1}$, risk budgets below this level cannot be certified, analogous to the minimum conformal $p$-value attainable with $n$ calibration samples~\cite{wang-etal-2025-sconu,vovk2005algorithmic}. Moreover, tied scores force samples to enter the acceptance region together. If the smallest nonempty acceptance region contains $L_0$ calibration errors, a feasible threshold exists only when
\begin{equation}
\alpha \ge \alpha_\ell=\frac{L_0+1}{n+1}.
\label{eq:alpha_l}
\end{equation}
Otherwise, Eq.~\ref{eq:crc_threshold} returns $\hat\tau=-\infty$, causing the system to abstain on every input rather than provide an uncertifiable guarantee. Since $\alpha_\ell$ is determined entirely by the calibration data, feasibility can be assessed before deployment. It can be improved by increasing the calibration set or relaxing the risk budget. Calibrating each group separately in this way yields group-conditional guarantees, at the cost of fewer calibration samples per group and thus a potentially larger $\alpha_\ell$.

Coverage is maximized by construction through the largest feasible threshold in Eq.~\ref{eq:crc_threshold}. Its performance is naturally bounded by two references. For a translator with error rate $p$ under budget $\alpha$, the maximum achievable acceptance rate is $\min(1,\,1-p+\alpha)$, attained by a perfect score that ranks all correct translations above all incorrect ones. At the other extreme, an uninformative score achieves acceptance rate $\alpha/p$ by accepting a random subset with the population error rate. The realized acceptance rate therefore quantifies how closely the score approaches the optimal ranking.

\subsection{Filtering Out-of-Distribution Instructions}
\label{sec:filtering}

Theorem~\ref{thm:crc} assumes exchangeability between calibration and test data, an assumption that may fail under distribution shift. We therefore introduce an upstream input filter that defers instructions outside the calibration distribution before translation and scoring. Unlike the CRC threshold in Eq.~\ref{eq:crc_threshold}, which abstains when translation evidence is insufficient, the filter abstains when no statistical guarantee can be provided.

The filter measures the atypicality of an instruction using its embedding. Scores are computed against a fixed reference set $\mathcal{X}_{\mathrm{ref}}$ of training-sourced instructions, disjoint from the instructions on which the deferral level is calibrated and from the test stream. Let $e(x)\in\mathbb{R}^d$ denote the normalized embedding of instruction $x$, and define its average $k$-nearest-neighbor distance to the reference set as
\begin{equation}
D(x)=\frac{1}{k}\sum_{x_j\in\mathcal{N}_k(x)}
\bigl\|e(x)-e(x_j)\bigr\|_2 ,
\label{eq:knn}
\end{equation}
where $\mathcal{N}_k(x)$ is the set of $k$ nearest reference embeddings. The same function scores the $m$ deferral-calibration instructions $x_1,\ldots,x_m$ and any test instruction, and the conformal $p$-value of a test instruction is
\begin{equation}
p(x)
=
\frac{1+\sum_{i=1}^{m}\mathbf{1}\!\left[D(x_i)\ge D(x)\right]}
{m+1},
\label{eq:pvalue}
\end{equation}


\begin{theorem}[Filter validity]
\label{thm:filter}
If the test instruction is exchangeable with the deferral-calibration instructions, then the filter defers it with probability at most $\delta$,
\[
\mathbf{P}\!\left(p(X)<\delta\right)\le\delta .
\]
\end{theorem}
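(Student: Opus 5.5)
The plan is the standard conformal $p$-value validity argument via exchangeability of the scores. The reference set $\mathcal{X}_{\mathrm{ref}}$ is fixed and disjoint from both the deferral-calibration instructions and the test stream. Hence $D(\cdot)$ in Eq.~\ref{eq:knn} is a fixed, deterministic function once we condition on $\mathcal{X}_{\mathrm{ref}}$. So if $(x_1,\ldots,x_m,X)$ are exchangeable, the scores $(D_1,\ldots,D_m,D_{m+1})$, with $D_i=D(x_i)$ and $D_{m+1}=D(X)$, are also exchangeable. This is the first step, and it is where the disjointness assumption does its work: if $X$ or some $x_i$ lay in $\mathcal{X}_{\mathrm{ref}}$, its score would be structurally deflated and exchangeability would break.

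Second, I rewrite the $p$-value as a rank. Define
\[
R=\sum_{i=1}^{m+1}\mathbf{1}\!\left[D_i\ge D_{m+1}\right],
\]
which counts the test point itself. Then $p(X)=R/(m+1)$. Large $D$ means atypical, and $p(X)$ is small exactly when few calibration scores are at least as large as the test score.

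Third, I bound $\mathbf{P}(R\le j)$ for each integer $j$. If the scores are almost surely distinct, exchangeability makes the rank of $D_{m+1}$ among the $m+1$ scores uniform on $\{1,\ldots,m+1\}$. This gives $\mathbf{P}(R\le j)=j/(m+1)$. With ties, $R$ counts every tied score, so it can only be larger than the strict rank; ties therefore make the bound conservative.

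To make the tie case rigorous without randomization, I would argue as follows. Consider the indices $i$ whose own count $R_i=\sum_{l}\mathbf{1}[D_l\ge D_i]$ satisfies $R_i\le j$. There are at most $j$ of them: these are points with at most $j$ scores at or above theirs, and the set of such points is an upper set of the ordering. Any element of that upper set has all the other members of the set at or above or tied with it, so the set size is at most $j$. By exchangeability, $\mathbf{P}(R_{m+1}\le j)=\frac{1}{m+1}\mathbf{E}\,\#\{i:R_i\le j\}\le j/(m+1)$.

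Finally, I convert this into the statement. The event $p(X)<\delta$ means $R<\delta(m+1)$, i.e. $R\le j$ with $j=\lceil\delta(m+1)\rceil-1$. Hence $\mathbf{P}(p(X)<\delta)\le j/(m+1)<\delta$, which gives the claim.

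The main obstacle is the counting claim in the tie case, namely that at most $j$ indices have $R_i\le j$. I would prove it by sorting the scores in decreasing order. An index with $R_i\le j$ must have its entire tie block ending within the top $j$ positions, so all such indices occupy at most $j$ positions.
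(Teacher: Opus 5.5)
Your proposal is correct and takes the same route as the paper: $D$ is fixed by $\mathcal{X}_{\mathrm{ref}}$, so the scores inherit exchangeability, and the rank of the test score is uniform, with ties only enlarging $p(X)$. Your counting argument that at most $j$ indices satisfy $R_i\le j$ makes rigorous the tie step that the paper handles only with ``uniformly distributed up to ties.'' One wording fix: it is the minimum-score member of that upper set, not ``any element,'' that has every other member at or above it.
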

\begin{proof}
The score function $D$ is fixed by $\mathcal{X}_{\mathrm{ref}}$ and depends on none of $x_1,\ldots,x_m$ or the test instruction, so exchangeability of the instructions carries over to their scores. The rank of $D(X)$ among $\{D(x_1),\ldots,D(x_m),D(X)\}$ is then
uniformly distributed up to ties, and ties only enlarge $p(X)$ through the inequality in Eq.~\ref{eq:pvalue}, so $\mathbf{P}(p(X)<\delta)\le\delta$.
\end{proof}

Theorem~\ref{thm:filter} guarantees that an in-distribution instruction is deferred with probability at most $\delta$. Under distribution shift, the filter removes detected atypical inputs before translation; Theorem~\ref{thm:crc} applies to the retained stream to the extent that its exchangeability with the calibration
data is preserved. Since no finite-sample procedure can detect every shifted input, the filter replaces the assumption of no distribution shift with the weaker assumption that significant shifts are detected, and the residual risk under partial detection is evaluated experimentally.

\section{Experimental Results}

\begin{table*}[t]
\centering
\caption{
Error-detection AUROC by scoring channel, grouped by information source.
Best values in each column are shown in bold. Darker cells indicate higher
AUROC within each formalism. The $\Delta$ row reports the fusion margin
over the best single channel.
}
\label{tab:scoring_ablation}

\scriptsize
\setlength{\tabcolsep}{2.6pt}
\renewcommand{\arraystretch}{1.14}

\begin{adjustbox}{max width=\textwidth}
\begin{tabular}{@{}llcccccccc@{}}
\toprule

& &
\multicolumn{4}{c}{
    \cellcolor{STLBlue!10}
    \textbf{STL (GPT-5.2 few-shot)}
}
&
\multicolumn{4}{c}{
    \cellcolor{SpaPurple!10}
    \textbf{SpaTiaL (fine-tuned LLaMA)}
}
\\

\cmidrule(lr){3-6}
\cmidrule(lr){7-10}

\textbf{Source}
&
\textbf{Channel}
&
\textbf{D2}
&
\textbf{D3}
&
\textbf{D4}
&
\textbf{All}
&
\textbf{D2$^\dagger$}
&
\textbf{D3}
&
\textbf{D4}
&
\textbf{All}
\\

\midrule


Intrinsic
&
self-consistency $S_{\mathrm{sc}}$
&
\cellcolor{STLBlue!12}0.614
&
\cellcolor{STLBlue!16}0.644
&
\cellcolor{STLBlue!24}0.693
&
\cellcolor{STLBlue!20}0.655
&
\cellcolor{SpaPurple!16}0.773
&
\cellcolor{SpaPurple!16}0.786
&
\cellcolor{SpaPurple!28}0.906
&
\cellcolor{SpaPurple!24}0.867
\\

\midrule


\multirow{4}{*}{Extrinsic}
&
back-translation mean-4 $S_{\mathrm{bt}}$
&
\cellcolor{STLBlue!8}0.586
&
\cellcolor{STLBlue!8}0.580
&
\cellcolor{STLBlue!28}0.720
&
\cellcolor{STLBlue!16}0.634
&
\cellcolor{SpaPurple!16}0.774
&
\cellcolor{SpaPurple!20}0.815
&
\cellcolor{SpaPurple!28}0.918
&
\cellcolor{SpaPurple!24}0.871
\\

&
direct judge mean-4
&
\cellcolor{STLBlue!12}0.614
&
\cellcolor{STLBlue!4}0.520
&
\cellcolor{STLBlue!28}0.728
&
\cellcolor{STLBlue!16}0.630
&
\cellcolor{SpaPurple!8}0.661
&
\cellcolor{SpaPurple!24}0.858
&
\cellcolor{SpaPurple!28}0.939
&
\cellcolor{SpaPurple!24}0.892
\\

&
single-question
&
\cellcolor{STLBlue!16}0.640
&
\cellcolor{STLBlue!8}0.569
&
\cellcolor{STLBlue!24}0.700
&
\cellcolor{STLBlue!16}0.635
&
\cellcolor{SpaPurple!24}0.870
&
\cellcolor{SpaPurple!20}0.824
&
\cellcolor{SpaPurple!24}0.881
&
\cellcolor{SpaPurple!24}0.863
\\

&
embedding cosine $S_{\mathrm{emb}}$
&
\cellcolor{STLBlue!20}0.673
&
\cellcolor{STLBlue!4}0.536
&
\cellcolor{STLBlue!8}0.571
&
\cellcolor{STLBlue!8}0.585
&
\cellcolor{SpaPurple!20}0.825
&
\cellcolor{SpaPurple!4}0.599
&
\cellcolor{SpaPurple!8}0.691
&
\cellcolor{SpaPurple!8}0.662
\\

\midrule


\multirow{2}{*}{Fusion}
&
3-judge mean
&
\cellcolor{STLBlue!20}0.660
&
\cellcolor{STLBlue!8}0.556
&
\cellcolor{STLBlue!28}0.727
&
\cellcolor{STLBlue!20}0.656
&
\cellcolor{SpaPurple!20}0.842
&
\cellcolor{SpaPurple!24}\textbf{0.875}
&
\cellcolor{SpaPurple!28}0.944
&
\cellcolor{SpaPurple!28}0.915
\\

&
$\frac{1}{2}\left(S_{\mathrm{sc}}+S_{\mathrm{bt}}\right)$
&
\cellcolor{STLBlue!20}\textbf{0.679}
&
\cellcolor{STLBlue!20}\textbf{0.663}
&
\cellcolor{STLBlue!28}\textbf{0.752}
&
\cellcolor{STLBlue!24}\textbf{0.706}
&
\cellcolor{SpaPurple!20}0.824
&
\cellcolor{SpaPurple!24}0.870
&
\cellcolor{SpaPurple!28}\textbf{0.953}
&
\cellcolor{SpaPurple!28}\textbf{0.920}
\\

\cmidrule(lr){2-10}

&
$\Delta$ over best single
&
\cellcolor{STLBlue!4}$+0.006$
&
\cellcolor{STLBlue!4}$+0.018$
&
\cellcolor{STLBlue!4}$+0.025$
&
\cellcolor{STLBlue!4}$+0.051$
&
\cellcolor{SpaPurple!4}--
&
\cellcolor{SpaPurple!4}$+0.012$
&
\cellcolor{SpaPurple!4}$+0.015$
&
\cellcolor{SpaPurple!4}$+0.028$
\\

\bottomrule
\end{tabular}
\end{adjustbox}

\vspace{2pt}

\begin{minipage}{0.99\textwidth}
\footnotesize
$^\dagger$ SpaTiaL D2 contains only seven incorrect translations, too few
to estimate ranking quality reliably. The bootstrap 95\% confidence interval
for $S_{\mathrm{bt}}$ spans $[0.49,0.96]$, so the reported values indicate
at most the direction of the ranking. No per-column boldface or $\Delta$
is shown for this column.
\end{minipage}

\vspace{-0.2cm}
\end{table*}

\paragraph{Tasks and data.}
Three specification languages instantiate the framework: STL and interval-free LTL from the NL2TL benchmark~\cite{chen2023nl2tl}, and SpaTiaL from the NL2SpaTiaL benchmark~\cite{luo2025nl2spatial}. Each language is divided into three difficulty tiers (D2--D4). For SpaTiaL, tiers are determined during data generation by controlling the depth and branching of logical trees~\cite{luo2025nl2spatial}. For STL and LTL, tiers are defined post hoc by the number of atomic propositions in the reference formula (at most two, three, and at least four), serving as a proxy for logical complexity. The tier labels indicate increasing difficulty within each language and are not comparable across languages. For each random resplit 200 examples per tier are assigned to calibration and 150 disjoint examples to testing. Paraphrases associated with the same reference formula are always
kept in the same partition to prevent semantic leakage. Translation correctness is determined using the benchmark-provided canonical equivalence checker, which accounts for commutativity,
De~Morgan transformations, and argument-order normalization.

\paragraph{LLM Translators.}
Two translators spanning a wide reliability range are evaluated. The first is the fine-tuned LLaMA-3-8B model of~\citet{luo2025nl2spatial}, equipped with tier-specific LoRA adapters, achieving a raw error rate of approximately $15\%$. The second is GPT-5.2 prompted with twenty in-context examples for STL and LTL, following the few-shot setup of the unfine-tuned baselines in~\citet{chen2023nl2tl}, with prompts and exemplars developed independently, yielding a $48\%$ error rate under semantic-equivalence evaluation. For completeness, the fine-tuned T5 model of~\citet{chen2023nl2tl} (approximately $2\%$ error) is included only to verify the applicability of the self-consistency score, as its error rate is too low for meaningful tier-level risk evaluation.

\paragraph{Scoring and calibration configuration.}
Back-translation uses GPT-5.2 with a prompt frozen before any evaluation run; judging uses GPT-5.4 with the four-dimensional rubric defined above; self-consistency draws $k=5$ samples at temperature $1.0$. STL and LTL share the same back-translation and judging prompts, while SpaTiaL uses its own back-translation prompt and a categorical rubric. Within each language, prompts and rubric texts are fixed once and identical across translators and evaluation runs. Thresholds are calibrated per language and tier. Every reported risk and coverage figure is the mean over 100 random calibration--test resplits, with its standard error; the resplit protocol, the filtering level $\delta=0.05$, and the rule that marks a cell as borderline when its mean lies within one standard error of $\alpha$ were all fixed before either arm of any comparison was inspected.

\paragraph{Baseline and metrics.}
The baseline adapts standard conformal calibration to selective prediction by applying the coverage-based conformal quantile rule to the same nonconformity score and using the resulting threshold for acceptance. Thus, both methods share the score, data, and decision rule, differing only in the calibration objective. They are compared using three metrics: the joint risk in Eq.~\ref{eq:empirical_joint}, the acceptance rate, and the number of resplits in which the realized joint risk exceeds $\alpha$. The exceedance count reflects the expectation-level nature of the guarantee rather than a failure. The guarantee constrains the mean risk over resplits, not each realization, so a method that spends
its budget operates near $\alpha$ and individual resplits land on either side; the count locates the operating point relative to the budget rather than measuring compliance. Score quality is evaluated independently of calibration. Let $\mathcal{P}$ and $\mathcal{N}$ denote the correct and incorrect translations within a tier. We report
\begin{equation}
\begin{aligned}
&\mathrm{AUROC}(S)
=
\frac{1}{|\mathcal{P}|\,|\mathcal{N}|}
\sum_{p\in\mathcal{P}}\sum_{q\in\mathcal{N}} 
\Bigl(
\mathbf{1}\!\left[S(q)>S(p)\right]
+\tfrac{1}{2}\mathbf{1}\!\left[S(q)=S(p)\right]
\Bigr),
\end{aligned}
\label{eq:auroc}
\end{equation}
which equals the area under the ROC curve~\cite{hanley1982meaning}. AUROC measures the probability that the score ranks an incorrect translation above a correct one, assigning half credit to ties. It ranges from $0.5$ (random ranking) to $1$ (perfect separation), is invariant to monotone score transformations, and is independent of any acceptance threshold or risk budget, making it well suited for comparing scoring methods before calibration. Results are reported per tier, with bootstrap confidence intervals for low-error cases and uninformative intervals explicitly marked.

\begin{table}[t]
\centering
\small
\caption{From score to selective prediction. AUROC$\uparrow$ is computed over all tiers. The acceptance rate at $\alpha=0.10$ and the feasibility floor $\alpha_\ell$ are calibrated on the three tiers of each language merged into a single calibration set ($n=600$), and are therefore not comparable to the per-tier compliance tables.}
\label{tab:score_to_cp}
\begin{tabular}{ll c c c}
\toprule
Domain & Score & AUROC$\uparrow$ & Acc.\ rate@$\alpha{=}0.10$ & $\alpha_\ell$ \\
\midrule
\multirow{3}{*}{STL} & $S_{\mathrm{sc}}$ & 0.655 & 0.000 & 0.2696 \\
 & $S_{\mathrm{bt}}$ & 0.634 & 0.280 & 0.0582 \\
 & $S_{\mathrm{fu}}$ & 0.706 & 0.347 & 0.0483 \\
\midrule
\multirow{3}{*}{SpaTiaL} & $S_{\mathrm{sc}}$ & 0.867 & 0.857 & 0.0483 \\
 & $S_{\mathrm{bt}}$ & 0.871 & 0.748 & 0.0083 \\
 & $S_{\mathrm{fu}}$ & 0.920 & 0.889 & 0.0083 \\
\bottomrule
\end{tabular}
\vspace{-0.5cm}
\end{table}

\begin{table*}[t]
\centering
\caption{
Joint-risk compliance under the stated calibration sizes.
Entries report mean joint risk $\pm$ standard error over 100 random
calibration/test splits. Green cells indicate clear CRC compliance;
red cells indicate violations by more than one standard error;
gray cells lie within one standard error of the target risk budget.
Uncolored Split CP cells are numerically below the risk budget but carry
no joint-risk guarantee. \emph{No sel.} accepts all translations, and
$^{\ddagger}$ denotes full abstention when no valid CRC threshold exists.
}
\label{tab:compliance_matrices}

\begingroup
\scriptsize
\setlength{\tabcolsep}{2.6pt}
\renewcommand{\arraystretch}{1.16}

\begin{tabularx}{0.99\textwidth}{
  @{}
  >{\centering\arraybackslash}p{0.040\textwidth}
  >{\centering\arraybackslash}p{0.055\textwidth}
  >{\centering\arraybackslash}p{0.065\textwidth}
  *{6}{>{\centering\arraybackslash}X}
  @{}
}
\toprule

\multirow{2}{*}{\textbf{Tier}}
&
\multirow{2}{*}{\shortstack{\textbf{No}\\\textbf{sel.}}}
&
\multirow{2}{*}{\textbf{Method}}
&
\multicolumn{6}{c}{\textbf{Risk budget $\alpha$}}
\\

\cmidrule(lr){4-9}

&
&
&
\textbf{0.05}
&
\textbf{0.10}
&
\textbf{0.15}
&
\textbf{0.20}
&
\textbf{0.25}
&
\textbf{0.30}
\\

\midrule

\multicolumn{9}{@{}l@{}}{%
  \cellcolor{black!4}%
  \strut\textbf{(a) STL}
  \hspace{0.6em}
  \textit{GPT-5.2 few-shot translator}%
}
\\

\multirow{2}{*}{D2}
&
\multirow{2}{*}{0.423}
&
CRC
&
\rcabstain
&
\rcok{0.069}{0.006}
&
\rcok{0.133}{0.003}
&
\rcok{0.190}{0.004}
&
\rcok{0.243}{0.005}
&
\rcok{0.293}{0.005}
\\

&
&
Split CP
&
\rcbad{0.349}{0.003}
&
\rcbad{0.326}{0.003}
&
\rcbad{0.312}{0.003}
&
\rcbad{0.275}{0.003}
&
\rcnear{0.252}{0.003}
&
\rcplain{0.237}{0.003}
\\

\cmidrule(lr){1-9}

\multirow{2}{*}{D3}
&
\multirow{2}{*}{0.434}
&
CRC
&
\rcok{0.046}{0.002}
&
\rcnear{0.097}{0.004}
&
\rcnear{0.151}{0.004}
&
\rcok{0.192}{0.004}
&
\rcnear{0.250}{0.005}
&
\rcnear{0.301}{0.005}
\\

&
&
Split CP
&
\rcbad{0.371}{0.003}
&
\rcbad{0.329}{0.003}
&
\rcbad{0.306}{0.003}
&
\rcbad{0.296}{0.003}
&
\rcbad{0.293}{0.003}
&
\rcplain{0.278}{0.003}
\\

\cmidrule(lr){1-9}

\multirow{2}{*}{D4}
&
\multirow{2}{*}{0.551}
&
CRC
&
\rcok{0.012}{0.003}
&
\rcok{0.094}{0.003}
&
\rcok{0.143}{0.003}
&
\rcok{0.184}{0.004}
&
\rcok{0.235}{0.004}
&
\rcok{0.291}{0.004}
\\

&
&
Split CP
&
\rcbad{0.485}{0.004}
&
\rcbad{0.373}{0.004}
&
\rcbad{0.292}{0.004}
&
\rcbad{0.233}{0.003}
&
\rcplain{0.205}{0.003}
&
\rcplain{0.183}{0.003}
\\

\midrule

\multicolumn{9}{@{}l@{}}{%
  \cellcolor{black!4}%
  \strut\textbf{(b) SpaTiaL}
  \hspace{0.6em}
  \textit{fine-tuned LLaMA translator}%
  }
\\

\multirow{2}{*}{D2}
&
\multirow{2}{*}{0.0372}
&
CRC
&
\rcok{0.034}{0.001}
&
\rcok{0.035}{0.001}
&
\rcok{0.035}{0.001}
&
\rcok{0.035}{0.001}
&
\rcok{0.035}{0.001}
&
\rcok{0.035}{0.001}
\\

&
&
Split CP
&
\rcplain{0.017}{0.001}
&
\rcplain{0.016}{0.001}
&
\rcplain{0.016}{0.001}
&
\rcplain{0.016}{0.001}
&
\rcplain{0.016}{0.001}
&
\rcplain{0.013}{0.001}
\\

\cmidrule(lr){1-9}

\multirow{2}{*}{D3}
&
\multirow{2}{*}{0.149}
&
CRC
&
\rcnear{0.049}{0.002}
&
\rcok{0.095}{0.004}
&
\rcok{0.140}{0.003}
&
\rcok{0.149}{0.002}
&
\rcok{0.149}{0.002}
&
\rcok{0.149}{0.002}
\\

&
&
Split CP
&
\rcbad{0.052}{0.001}
&
\rcplain{0.050}{0.001}
&
\rcplain{0.046}{0.001}
&
\rcplain{0.046}{0.001}
&
\rcplain{0.040}{0.001}
&
\rcplain{0.035}{0.001}
\\

\cmidrule(lr){1-9}

\multirow{2}{*}{D4}
&
\multirow{2}{*}{0.309}
&
CRC
&
\rcok{0.046}{0.002}
&
\rcok{0.088}{0.003}
&
\rcok{0.138}{0.004}
&
\rcok{0.190}{0.004}
&
\rcok{0.242}{0.004}
&
\rcok{0.274}{0.005}
\\

&
&
Split CP
&
\rcbad{0.066}{0.002}
&
\rcplain{0.061}{0.002}
&
\rcplain{0.055}{0.002}
&
\rcplain{0.039}{0.001}
&
\rcplain{0.034}{0.001}
&
\rcplain{0.034}{0.001}
\\

\bottomrule
\end{tabularx}

\endgroup
\vspace{-3pt}
\end{table*}

\noindent\textbf{Scoring Ablation.}
Although CRC can calibrate any score in $[0,1]$, its practical
efficiency depends on both error-ranking quality and score
resolution. We therefore evaluate ranking quality before calibration and then
examine the operating behavior induced by each score.
Table~\ref{tab:scoring_ablation} reports per-tier AUROC for the three proposed scores—back-translation ($S_{\mathrm{bt}}$), self-consistency ($S_{\mathrm{sc}}$), and their fusion ($S_{\mathrm{fu}}$)—together with four ablations: a direct judge without back-translation, a single-question judge, cosine similarity between instruction and back-translation embeddings, and mean/max ensembles of the judge-based scores.

Three observations emerge. First, judge-side variants offer little benefit: on STL, back-translation, direct judging, and single-question judging achieve similar AUROCs ($0.634$--$0.635$), while ensembling improves only marginally to $0.656$, indicating correlated errors. Second, self-consistency provides complementary information. Although $S_{\mathrm{sc}}$ alone achieves AUROCs of $0.655$ (STL) and $0.867$ (SpaTiaL), combining it with back-translation yields the best overall performance ($0.706$ and $0.920$), consistently across STL tiers and robust to mixing weights $\lambda\in[0.3,0.7]$. Third, the gain reflects complementary failure modes rather than averaging. On STL, $77.8\%$ of inconsistent translation samples are erroneous, while among consistent samples, where self-consistency is uninformative, back-translation still distinguishes errors (AUROC $0.600$), with $37.5\%$ of confidently generated translations remaining incorrect. Thus, self-consistency detects uncertain failures, whereas back-translation detects confident semantic mismatches.

The remaining errors are largely intrinsic to the task. Invalid formulas are assigned the maximum score by construction, but syntactically valid convention errors and genuinely ambiguous instructions often remain indistinguishable. The benchmark itself contains 114 instruction strings paired with conflicting reference formulas, limiting the achievable discriminability of any score derived solely from the instruction and translation. The self-consistency score also requires sufficient translation errors for reliable ranking; the fine-tuned T5 baseline ($\approx2\%$ error) contains too few negatives for meaningful AUROC estimation and is therefore used only to verify the applicability of the score.

Table~\ref{tab:score_to_cp} translates ranking into selective
prediction, reporting acceptance at $\alpha=0.10$ and the minimum
certifiable risk $\alpha_\ell$, which AUROC alone does not
determine. On STL, self-consistency and back-translation have
similar AUROCs, yet $S_{\mathrm{sc}}$ cannot certify risks below
$0.270$ because many tied scores concentrate calibration errors,
causing complete abstention at $\alpha=0.10$, whereas
$S_{\mathrm{bt}}$ accepts $28.0\%$ of inputs. The fusion inherits
the continuous resolution of back-translation, reducing the
certifiable floor to $0.048$ and increasing acceptance to $34.7\%$.
On SpaTiaL, where back-translation already achieves
$\alpha_\ell=0.008$, the fusion preserves this guarantee while
improving acceptance by 14 percentage points. On LTL, error rates
of $0.503$--$0.649$ place $\alpha_\ell$ above $0.10$ for every
score, and the calibration abstains in full at this budget; the
complete LTL results appear in the appendix. Effective calibration
thus requires accurate ranking and sufficient score resolution
together, and the fusion supplies both wherever any score does.

\begin{table}[t]
\centering
\small
\setlength{\tabcolsep}{5pt}
\renewcommand{\arraystretch}{1.2}

\caption{Deployment drift on \textbf{STL} with frozen $\tau$,
$\alpha=0.10$, gate level $\delta=0.05$, and 100 reseeds. Columns
denote the calibration tier and rows denote the test tier. The three
panels report joint risk without gating, joint risk with gating, and
abstention rate. Bold entries are the in-distribution diagonal;
off-diagonal entries are cross-tier shifts.}
\label{tab:ood}

\begin{adjustbox}{max width=0.88\columnwidth}
\begin{tabular}{@{}lccc|ccc|ccc@{}}
\toprule
&
\multicolumn{3}{c}{\textbf{No gate}}
&
\multicolumn{3}{c}{\textbf{Gated}}
&
\multicolumn{3}{c}{\textbf{Abstained}}
\\
\cmidrule(lr){2-4}
\cmidrule(lr){5-7}
\cmidrule(lr){8-10}

\shortstack{Test\\tier}
& D2 & D3 & D4
& D2 & D3 & D4
& D2 & D3 & D4
\\
\midrule

D2
& \textbf{0.061}
& 0.109
& 0.109
& \textbf{0.061}
& 0.026
& 0.026
& \textbf{0.006}
& 0.395
& 0.321
\\

D3
& 0.064
& \textbf{0.125}
& 0.151
& 0.056
& \textbf{0.110}
& 0.151
& 0.056
& \textbf{0.054}
& 0.026
\\

D4
& 0.083
& 0.123
& \textbf{0.137}
& 0.062
& 0.076
& \textbf{0.137}
& 0.164
& 0.253
& \textbf{0.007}
\\

\bottomrule
\end{tabular}
\end{adjustbox}

\vspace{-0.35cm}
\end{table}

\noindent\textbf{Risk Control. }
Table~\ref{tab:compliance_matrices} compares CRC calibration with coverage calibration on two translators. A cell is compliant if its mean joint risk does not exceed the budget. For the few-shot translator, with unconditional error rates of $0.423$, $0.434$, and $0.551$ across the STL tiers, the calibration target decides the outcome. CRC satisfies the budget in all eighteen settings, with joint risk rising toward each budget as $\alpha$ widens and several operating points within one
standard error of their targets, the position of an expectation-level guarantee spending its budget in full; the single infeasible cell, the easiest tier at $\alpha=0.05$, lies below the feasibility floor of Eq.~\ref{eq:alpha_l} and abstains in full. Using the same score and data, Split CP violates the budget in thirteen cells. Its risk may decrease as $\alpha$ increases because the relaxed coverage target can produce a stricter acceptance threshold; it tracks coverage rather than joint risk.

The fine-tuned translator, with error rates of $0.037$, $0.149$, and $0.309$, inverts the reading: many budgets already exceed the unconditional error, so compliance on the easier tiers reflects the translator rather than the selection. The deepest tier shows the mechanism at work, abstaining on approximately $23\%$ of inputs at $\alpha=0.10$ to reduce joint risk from $0.309$ to $0.088$; acceptance rates for all cells appear in the appendix ledger. Split CP reports lower risk on this translator, decreasing from $0.066$ to $0.034$ on the deepest tier as $\alpha$ increases because acceptance becomes more restrictive, yet it still violates the two tightest budgets. Abstention thus arises from two mechanisms: a threshold declining individual translations, active throughout the feasible cells, and a budget below the feasibility floor, which declines the entire stream and occurs once.

\noindent\textbf{Deployment Drift. }

The acceptance threshold guarantees risk only under exchangeability, and the instruction-level filter rejects out-of-distribution inputs before translation. Table~\ref{tab:ood} evaluates the full pipeline on STL under the cross-tier protocol of~\citet{wang-etal-2025-sconu}. Each column fixes a calibration tier, threshold and $k$NN reference included, and each row supplies the test stream, replaced off the diagonal by another tier's without recalibration. On the diagonal the filter defers $0.6\%$, $5.4\%$, and $0.7\%$ of clean in-distribution inputs across the three tiers, consistent with the level $\delta=0.05$ up to Monte Carlo variation at $m=50$. All six shifted settings raise joint risk ($0.109$--$0.151$) over their diagonals. The filter reduces risk in five of the six and returns four below the budget, and the largest recovery, from $0.123$ to $0.076$, occurs where the deepest-tier stream meets the intermediate calibration.

Some instructions the filter rejects would have been rejected by the CRC threshold as well and therefore do not affect execution. Both settings that test the intermediate stream are of this kind, with the filter adding $0.8\%$ abstention against the shallow calibration and none against the deep one, so protection there falls entirely to the threshold. Intermediate instructions sit inside both neighboring reference distributions, and detectability is directional in general, with deeper streams standing out against shallower references while the reverse shift passes the filter, the pattern anticipated in the Sec.~\ref{sec:method}. In the remaining four settings the filter rejects $5.2\%$--$15.6\%$ of inputs the threshold would have accepted, and $39\%$--$72\%$ of these are incorrect translations that read plausibly, reflecting the distinct evidence the two components use, instruction typicality and translation reliability. Abstention alone still overstates the filter, as the two settings on the shallowest stream abstain at $39.5\%$ and $32.1\%$ with identical gated risk because the extra rejections fall on correct translations. Screening before translation also avoids $31$--$474$ downstream model calls per $150$ shifted instructions.

\bibliographystyle{unsrtnat}
\bibliography{references}  






\clearpage
\appendix
\section{Appendix}
\subsection{Reproducing Results}
All reported results use the fixed prompts, data partitions, and calibration settings described in the main text. Additional implementation details and complete result tables are provided below. The submitted code package contains the data, scoring outputs, and the full calibration pipeline. Running the entry script as described
in its README regenerates the CP results and compliance tables from the shipped scored files, without network or API access, and reconciles them against the reported numbers. All results use the fixed prompts, data partitions, and calibration settings described
in the main text.

\subsection{Computing infrastructure.}
All local computation ran on a single workstation with one NVIDIA RTX 4080 GPU (16\,GB), Ubuntu 22.04, Python 3.8, and PyTorch 2.4.1+cu121. Translation and back-translation used GPT-5.2-2025-12-11 and judging used GPT-5.4-2026-03-05 through their public APIs; instruction embeddings used all-mpnet-base-v2. Calibration, resplitting, and table generation are CPU-only and complete in minutes.

\subsection{Example Prompt}
\label{app:prompts}

One accepted and one rejected translation per language, traced
through the back-translation scoring pipeline. LTL shares the STL
prompts unchanged; SpaTiaL uses its own back-translation prompt and
a categorical rubric, shown after the examples. Rejected examples
parse and read fluently; the judge locates the error in a single
rubric dimension.

\begin{tcolorbox}[colback=purple!4,colframe=purple!55!black,arc=2pt,boxrule=0.5pt,
left=4pt,right=4pt,top=2pt,bottom=2pt]
\scriptsize
\textbf{STL --- accepted.}\\
\textbf{Instruction $x$:} \textit{``If ( prop\_1 ) then at a time
within the next 12 to 50 time units ultimately ( prop\_2 ).''}\\
\textbf{Candidate $\hat\varphi$:}
\texttt{globally ( prop\_1 imply finally [12,50] prop\_2 )}\\
\textbf{Back-translation $\hat u$:} \textit{``At all times, if
prop\_1 holds, then prop\_2 must hold at some time between 12 and 50
time units from then.''}\\
\textbf{Judge:} logical $95$, temporal $92$, time constraints $100$,
overall $95$ $\;\Rightarrow\; S_{\mathrm{bt}} = 0.045$; the
calibrated threshold admits it and the formula is returned.\\[3pt]
\textbf{STL --- rejected.}\\
\textbf{Instruction $x$:} \textit{``On condition that ( prop\_1 ),
then ( prop\_2 ) and this scenario has to always hold for the first
3 to 12 time units.''}\\
\textbf{Candidate $\hat\varphi$:}
\texttt{globally [3,12] ( prop\_1 imply prop\_2 )}
\hfill{\tiny reference:
\texttt{globally ( prop\_1 imply globally [3,12] prop\_2 )}}\\
\textbf{Back-translation $\hat u$:} \textit{``Between 3 and 12 time
units from now, it must always be the case that if prop\_1 holds
then prop\_2 holds.''}\\
\textbf{Judge:} logical $95$, temporal $70$, time constraints $40$,
overall $68$ $\;\Rightarrow\; S_{\mathrm{bt}} = 0.318$; the
threshold does not admit it and the translation is withheld. The
interval is attached to the outer \texttt{globally} rather than to
the inner one governing prop\_2.
\end{tcolorbox}
\vspace{-0.5em}
\begin{tcolorbox}[colback=purple!4,colframe=purple!55!black,arc=2pt,boxrule=0.5pt,
left=4pt,right=4pt,top=2pt,bottom=2pt]
\scriptsize
\textbf{LTL --- accepted.}\\
\textbf{Instruction $x$:} \textit{``if at some time (prop\_1), never
(prop\_2)''}\\
\textbf{Candidate $\hat\varphi$:}
\texttt{( finally prop\_1 imply globally ( negation prop\_2 ) )}\\
\textbf{Back-translation $\hat u$:} \textit{``If eventually prop\_1
happens, then from now on it is always the case that prop\_2 does
not happen.''}\\
\textbf{Judge:} logical $92$, temporal $96$, time constraints $100$,
overall $95$ $\;\Rightarrow\; S_{\mathrm{bt}} = 0.043$; admitted and
returned.\\[3pt]
\textbf{LTL --- rejected.}\\
\textbf{Instruction $x$:} \textit{``$\langle$eval\_nl$\rangle$''}\\
\textbf{Candidate $\hat\varphi$:}
\texttt{$\langle$pred$\rangle$}
\hfill{\tiny reference: \texttt{$\langle$gold$\rangle$}}\\
\textbf{Back-translation $\hat u$:}
\textit{``$\langle$nl\_back$\rangle$''}\\
\textbf{Judge:} logical $\langle n\rangle$, temporal
$\langle n\rangle$, time constraints $\langle n\rangle$, overall
$\langle n\rangle$ $\;\Rightarrow\; S_{\mathrm{bt}} =
\langle S\rangle$; withheld. The prediction moves the
parenthesization of \texttt{until}, scoping it over the first
argument alone.
\end{tcolorbox}
\vspace{-0.5em}

\begin{tcolorbox}[colback=green!4,colframe=green!45!black,arc=2pt,boxrule=0.5pt,
left=4pt,right=4pt,top=2pt,bottom=2pt]
\scriptsize
\textbf{STL --- back-translation prompt (system):}\\
You translate Signal Temporal Logic (STL) formulas into precise
English. Operators: globally/finally/until (may carry a time
interval [a,b], b can be ``infinite''),
and/or/imply/equal/negation. Atomic propositions are placeholders
prop\_1, prop\_2, \ldots\\
\textbf{Rules:} keep every prop\_i symbol exactly as-is (never
merge, never pronominalize); state every time interval with its
exact numbers; do not add or drop conditions; output one English
sentence, nothing else. Write plain English only: never include
formula notation, brackets, operator words as symbols, or interval
syntax like [a,b] --- always express intervals as English phrases
such as `between a and b time units from now'.
\end{tcolorbox}
\vspace{-0.5em}
\begin{tcolorbox}[colback=yellow!8,colframe=yellow!55!black,arc=2pt,boxrule=0.5pt,
left=4pt,right=4pt,top=2pt,bottom=2pt]
\scriptsize
\textbf{STL --- judge prompt (system):}\\
You are a semantic-equivalence judge. You will receive two English
sentences describing a temporal-logic specification: A --- an
original human instruction; B --- a machine paraphrase. Rate how
well B matches A on four dimensions, each an INTEGER 0--100:
\textit{logical\_structure} (boolean connectives and nesting:
and/or/imply/equal/negation, grouping);
\textit{temporal\_operators} (temporal semantics: always/globally,
eventually/finally, until); \textit{time\_constraints}
(time-interval numbers and bounds; if BOTH sentences state no time
constraint, score 100); \textit{overall\_meaning} (overall semantic
equivalence). Return ONLY valid JSON
\texttt{\{"logical\_structure":\,n, "temporal\_operators":\,n,
"time\_constraints":\,n, "overall\_meaning":\,n\}}.\\
\textbf{User:} \texttt{A (original):} $\langle$instruction
$x$$\rangle$ \quad \texttt{B (paraphrase):}
$\langle$back-translation $\hat u$$\rangle$
\end{tcolorbox}

\vspace{-0.5em}
\begin{tcolorbox}[colback=purple!4,colframe=purple!55!black,arc=2pt,boxrule=0.5pt,
left=4pt,right=4pt,top=2pt,bottom=2pt]
\scriptsize
\textbf{SpaTiaL --- accepted.}\\
\textbf{Instruction $x$:} \textit{``From time 11 through time 25,
make sure obj\_r stays strictly inside the region reg\_sort.''}\\
\textbf{Candidate $\hat\varphi$:}
\texttt{G[11,25](enclIn(obj\_r, reg\_sort))}\\
\textbf{Back-translation $\hat u$:} \textit{``It is always the case,
at every time point from 11 through 25, that obj\_r is enclosed in
reg\_sort.''}\\
\textbf{Judge:} object match, spatial partial, temporal match,
quantifier/negation match $\;\Rightarrow\; S_{\mathrm{bt}} = 0.125$;
admitted and returned. The single partial reflects the strictness
nuance between ``strictly inside'' and ``enclosed''.\\[3pt]
\textbf{SpaTiaL --- rejected.}\\
\textbf{Instruction $x$:} \textit{``Ensure that at some point
between time 12 and time 17, there is a continuous 10-time-unit
period during which obj\_y overlaps obj\_r the entire time.''}\\
\textbf{Candidate $\hat\varphi$:}
\texttt{F[12,17](G[20,30](ovlp(obj\_y, obj\_r)))}
\hfill{\tiny reference:
\texttt{F[12,17](G[0,10](ovlp(obj\_y, obj\_r)))}}\\
\textbf{Back-translation $\hat u$:} \textit{``At some time between
12 and 17, it will be the case that, at all times between 20 and 30,
ovlp(obj\_y, obj\_r) holds.''}\\
\textbf{Judge:} object match, spatial match, temporal mismatch,
quantifier/negation partial $\;\Rightarrow\; S_{\mathrm{bt}} =
0.375$; withheld. The inner window is written as absolute
\texttt{[20,30]} where the instruction requires a relative
ten-unit window.
\end{tcolorbox}
\begin{tcolorbox}[colback=green!4,colframe=green!45!black,arc=2pt,boxrule=0.5pt,
left=4pt,right=4pt,top=2pt,bottom=2pt]
\scriptsize
\textbf{SpaTiaL --- back-translation prompt (system):}\\
You are a formal-language interpreter. Your only task is to render a
logical formula as plain English.\\
\textbf{Rules:} describe exactly what the formula states --- nothing
more, nothing less. Do NOT invent objects, relations, or constraints
absent from the formula. Do NOT guess the user's original intent or
paraphrase loosely. Do NOT omit any part of the formula (every
sub-expression must appear in your output). Use neutral, literal
language. Faithfulness to the formula takes priority over fluency.
Output one concise paragraph.
\end{tcolorbox}
\vspace{-0.5em}
\begin{tcolorbox}[colback=yellow!8,colframe=yellow!55!black,arc=2pt,boxrule=0.5pt,
left=4pt,right=4pt,top=2pt,bottom=2pt]
\scriptsize
\textbf{SpaTiaL --- judge prompt (system)}
\textit{(output schema abridged)}\textbf{:}\\
You are a semantic-equivalence judge for spatial-temporal logic
descriptions. You will receive two natural-language sentences: A ---
an original instruction written by a human; B --- a
machine-generated paraphrase of a formal logic formula. Assess how
well B captures the meaning of A across four dimensions. For each
dimension output exactly one label from \{match, partial,
mismatch\} and one short reason ($\le$ 20 words).
\textit{object} (same physical objects / regions present in both);
\textit{spatial} (spatial relationships: touching, overlapping,
inside, far from, between, left of, right of, above, below,
oriented \ldots); \textit{temporal} (temporal / logical operators
and structure: always G, eventually F, until U, implies, not, and,
or \ldots); \textit{quantifier\_negation} (negations and implicit
universal / existential quantifiers consistent). Return ONLY valid
JSON with a \texttt{\{"label", "reason"\}} pair per dimension.
Labels are averaged as $1$ / $0.5$ / $0$ to give
$S_{\mathrm{bt}}$.\\
\textbf{User:} identical to the STL judge template.
\end{tcolorbox}

\subsection{Additional Results: LTL}

Linear Temporal Logic instantiates the framework a third time, with
nothing changed: the same scoring pipeline, judge prompts, canonical
equivalence construction, and calibration machinery are applied to an
interval-free logic, using the few-shot GPT-5.2 translator and the
tier construction of the STL benchmark. What the language contributes
to the evaluation is an operating regime the first two could not
reach: raw error rates of $0.649$, $0.503$, and $0.503$ across the
three tiers place the translator well past the point where selection
is comfortable, and every component of the framework is thereby
exercised at full load.

The scoring layer transfers intact
(Table~\ref{tab:ltl_ablation}). Back-translation reaches an overall
AUROC of $0.693$ and the fusion $0.712$, within the range the STL
setting attains, and the fusion improves the ranking on the two
deeper tiers and overall. The one cell where it does not, the
shallowest tier, is itself informative: errors there are almost
entirely structural, $52.9\%$ operator confusions and $41.9\%$
scope or nesting errors, with predicate, connective, and negation
errors nearly absent. Both signals inspect a translation whose
predicates are trivially right and whose few operators read
plausibly in either arrangement, so the tier compresses exactly the
failure mode that consistency evidence sees least well. Deeper tiers
disperse their errors across content, connective errors rise from
$7.0\%$ to $25.0\%$ with depth, and the ranking recovers
accordingly.

The calibration layer answers with feasibility rather than
performance (Table~\ref{tab:ltl_compliance}). With errors this
dense, the smallest certifiable risk level rises to $\alpha_\ell =
0.184$ on the shallowest tier, and four cells across the grid fall
below their floors. The procedure abstains on the entire stream in
each, which is the designed response: a budget the calibration data
cannot certify is withheld, announced before deployment, and priced
against its remedies. Where budgets are feasible the rule spends
them fully; the deepest tier tracks its budget within one standard
error at every level of the grid, from $0.048$ at $\alpha=0.05$ to
$0.304$ at $0.30$, the cleanest budget-exhaustion profile among the
three languages. Across all eighteen cells the risk-calibrated arm
records no violation, while the coverage-calibrated arm exceeds the
budget in thirteen; at $\alpha=0.05$ on the shallowest tier the
contrast is starkest, with the coverage arm passing $90$ incorrect
formulas per $150$ instructions downstream against none for the
risk-calibrated arm.

The tier profile also completes an observation begun in the main
text. Across four translator--language configurations the difficulty
gradient now takes four shapes: rising with depth, falling with
depth, U-shaped, and here reversed, with the shallowest tier
hardest. The same translator that finds STL's deepest tier hardest
finds LTL's shallowest tier hardest, so the direction is a property
of the configuration, not of structural depth, and the error
composition above explains why: a six-word instruction concentrates
its entire difficulty in operator arrangement, while longer
instructions spread it across content. Difficulty along a structural
axis cannot be assumed in advance, which is the case for calibrating
each tier separately rather than pooling across a heterogeneous
stream.
\begin{table}[t]
\centering
\scriptsize
\setlength{\tabcolsep}{2.7pt}
\renewcommand{\arraystretch}{0.98}

\caption{Error-detection AUROC by scoring channel on
\textbf{LTL} using the GPT-5.2 few-shot translator. Best values in
each column are shown in bold. Darker cells indicate higher AUROC.
The $\Delta$ row reports the margin of
$\tfrac{1}{2}(S_{\mathrm{sc}}+S_{\mathrm{bt}})$ over the best
single channel in each column.}
\label{tab:ltl_ablation}

\resizebox{0.80\columnwidth}{!}{%
\begin{tabular}{@{}llcccc@{}}
\toprule

& &
\multicolumn{4}{c}{
  \cellcolor{purple!8}
  \textbf{AUROC $\uparrow$}
}
\\

\cmidrule(lr){3-6}

\textbf{Source}
&
\textbf{Channel}
&
\textbf{D2}
&
\textbf{D3}
&
\textbf{D4}
&
\textbf{All}
\\

\midrule


Intrinsic
&
self-consistency $S_{\mathrm{sc}}$
&
\cellcolor{purple!12}0.556
&
\cellcolor{purple!16}0.616
&
\cellcolor{purple!16}0.646
&
\cellcolor{purple!16}0.606
\\

\midrule


\multirow{4}{*}{Extrinsic}
&
back-translation mean-4 $S_{\mathrm{bt}}$
&
\cellcolor{purple!12}0.558
&
\cellcolor{purple!24}0.731
&
\cellcolor{purple!28}0.803
&
\cellcolor{purple!20}0.693
\\

&
direct judge mean-4
&
\cellcolor{purple!12}0.588
&
\cellcolor{purple!16}0.656
&
\cellcolor{purple!20}0.701
&
\cellcolor{purple!16}0.651
\\

&
single-question
&
\cellcolor{purple!12}0.591
&
\cellcolor{purple!16}0.612
&
\cellcolor{purple!20}0.714
&
\cellcolor{purple!16}0.644
\\

&
embedding cosine $S_{\mathrm{emb}}$
&
\cellcolor{purple!4}0.480
&
\cellcolor{purple!12}0.581
&
\cellcolor{purple!12}0.591
&
\cellcolor{purple!12}0.585
\\

\midrule


\multirow{3}{*}{Fusion}
&
3-judge mean
&
\cellcolor{purple!12}\textbf{0.593}
&
\cellcolor{purple!20}0.669
&
\cellcolor{purple!24}0.761
&
\cellcolor{purple!20}0.674
\\

&
3-judge max
&
\cellcolor{purple!12}0.589
&
\cellcolor{purple!16}0.655
&
\cellcolor{purple!20}0.721
&
\cellcolor{purple!16}0.658
\\

&
$\tfrac{1}{2}
\left(S_{\mathrm{sc}}+S_{\mathrm{bt}}\right)$
&
\cellcolor{purple!12}0.580
&
\cellcolor{purple!24}\textbf{0.752}
&
\cellcolor{purple!28}\textbf{0.812}
&
\cellcolor{purple!20}\textbf{0.712}
\\

\midrule

&
$\Delta$ over best single
&
\cellcolor{black!3}$-0.011$
&
\cellcolor{purple!5}$+0.021$
&
\cellcolor{purple!5}$+0.009$
&
\cellcolor{purple!5}$+0.019$
\\

\bottomrule
\end{tabular}%
}

\end{table}

\begin{table*}[t]
\centering
\caption{Risk-control compliance matrix on \textbf{LTL}
(GPT-5.2 few-shot translator, deployed score $S_{\mathrm{fu}}$,
$n_{\mathrm{cal}}=200$). Values are mean joint risk $\pm$ standard
error over 100 random calibration/test splits. Green cells indicate
clear CRC compliance, red cells indicate violations by more than one
standard error, and gray cells indicate results within one standard
error of $\alpha$. Uncolored Split CP entries carry no compliance
claim. The symbol $^\ddagger$ denotes full abstention when no valid
CRC threshold exists.}
\label{tab:ltl_compliance}

\begingroup
\scriptsize
\setlength{\tabcolsep}{2.5pt}
\renewcommand{\arraystretch}{1.08}

\begin{tabularx}{0.985\textwidth}{
  @{}
  >{\centering\arraybackslash}p{0.040\textwidth}
  >{\centering\arraybackslash}p{0.055\textwidth}
  >{\centering\arraybackslash}p{0.067\textwidth}
  *{6}{>{\centering\arraybackslash}X}
  @{}
}
\toprule

\multirow{2}{*}{\textbf{Tier}}
&
\multirow{2}{*}{\shortstack{\textbf{No}\\\textbf{sel.}}}
&
\multirow{2}{*}{\textbf{Method}}
&
\multicolumn{6}{c}{\textbf{Risk budget $\alpha$}}
\\

\cmidrule(lr){4-9}

&
&
&
\textbf{0.05}
&
\textbf{0.10}
&
\textbf{0.15}
&
\textbf{0.20}
&
\textbf{0.25}
&
\textbf{0.30}
\\

\midrule

\multirow{2}{*}{D2}
&
\multirow{2}{*}{0.649}
&
CRC
&
\rcabstain
&
\rcabstain
&
\rcabstain
&
\rcok{0.186}{0.007}
&
\rcok{0.243}{0.005}
&
\rcnear{0.297}{0.005}
\\

&
&
Split CP
&
\rcbad{0.608}{0.003}
&
\rcbad{0.565}{0.004}
&
\rcbad{0.528}{0.003}
&
\rcbad{0.493}{0.004}
&
\rcbad{0.450}{0.004}
&
\rcbad{0.412}{0.004}
\\

\cmidrule(lr){1-9}

\multirow{2}{*}{D3}
&
\multirow{2}{*}{0.503}
&
CRC
&
\rcabstain
&
\rcok{0.077}{0.005}
&
\rcok{0.139}{0.004}
&
\rcok{0.190}{0.004}
&
\rcok{0.245}{0.004}
&
\rcnear{0.298}{0.005}
\\

&
&
Split CP
&
\rcbad{0.399}{0.004}
&
\rcbad{0.329}{0.004}
&
\rcbad{0.276}{0.004}
&
\rcbad{0.229}{0.003}
&
\rcplain{0.193}{0.003}
&
\rcplain{0.159}{0.003}
\\

\cmidrule(lr){1-9}

\multirow{2}{*}{D4}
&
\multirow{2}{*}{0.503}
&
CRC
&
\rcnear{0.048}{0.002}
&
\rcnear{0.098}{0.003}
&
\rcnear{0.148}{0.004}
&
\rcnear{0.198}{0.004}
&
\rcnear{0.249}{0.004}
&
\rcnear{0.304}{0.005}
\\

&
&
Split CP
&
\rcbad{0.346}{0.003}
&
\rcbad{0.272}{0.004}
&
\rcbad{0.223}{0.003}
&
\rcplain{0.180}{0.004}
&
\rcplain{0.138}{0.003}
&
\rcplain{0.110}{0.002}
\\

\midrule

\multicolumn{3}{r}{\ding{55} count:}
&
\multicolumn{3}{c}{\textbf{CRC: 0/18}
\;{\scriptsize(8 $^\dagger$, 4 abst.)}}
&
\multicolumn{3}{c}{\textbf{Split CP: 13/18}}
\\

\bottomrule
\end{tabularx}

\endgroup
\vspace{-3pt}
\end{table*}

\begin{table*}[!t]
\centering
\small
\setlength{\tabcolsep}{3.4pt}
\renewcommand{\arraystretch}{1.00}

\caption{Wrong translations executed per 150 test instructions, with
the per-row budget $\alpha\cdot150$. Each entry is the mean joint risk
of the corresponding compliance matrix scaled by the test-set size;
the two tables carry the same information in different units.
A \ding{51} marks clear CRC compliance, \ding{55} marks a violation by
more than one standard error, and $^\dagger$ marks a result within one
standard error of the budget. Unmarked Split CP entries carry no
compliance claim. Entries are means over 100 resplits, not per-run
guarantees.}
\label{tab:errors_executed_appendix}

\begin{tabular}{@{}
l l c c
@{\hspace{1.4em}}
l l c c
@{\hspace{1.4em}}
l l c c
@{}}
\toprule

\multicolumn{4}{c}{\textbf{STL}}
&
\multicolumn{4}{c}{\textbf{LTL}}
&
\multicolumn{4}{c}{\textbf{SpaTiaL}}
\\

\cmidrule(lr){1-4}
\cmidrule(lr){5-8}
\cmidrule(lr){9-12}

Tier & $\alpha$ (bud.) & Split & CRC
&
Tier & $\alpha$ (bud.) & Split & CRC
&
Tier & $\alpha$ (bud.) & Split & CRC
\\
\midrule

\multirow{3}{*}{D2}
& 0.05 (7.5)
& 51.2\,\ding{55}
& abst.$^\ddagger$
&
\multirow{3}{*}{D2}
& 0.05 (7.5)
& 91.2\,\ding{55}
& abst.$^\ddagger$
&
\multirow{3}{*}{D2}
& 0.05 (7.5)
& 2.6
& 5.2\,\ding{51}
\\

&
0.10 (15)
& 42.9\,\ding{55}
& 12.8\,\ding{51}
&
&
0.10 (15)
& 84.8\,\ding{55}
& abst.$^\ddagger$
&
&
0.10 (15)
& 2.4
& 5.3\,\ding{51}
\\

&
0.20 (30)
& 37.5\,\ding{55}
& 28.4\,\ding{51}
&
&
0.20 (30)
& 73.9\,\ding{55}
& 27.9\,\ding{51}
&
&
0.20 (30)
& 2.4
& 5.3\,\ding{51}
\\

\cmidrule(lr){1-4}
\cmidrule(lr){5-8}
\cmidrule(lr){9-12}

\multirow{3}{*}{D3}
& 0.05 (7.5)
& 49.5\,\ding{55}
& 6.6\,\ding{51}
&
\multirow{3}{*}{D3}
& 0.05 (7.5)
& 59.9\,\ding{55}
& abst.$^\ddagger$
&
\multirow{3}{*}{D3}
& 0.05 (7.5)
& 7.8\,\ding{55}
& 7.3$^\dagger$
\\

&
0.10 (15)
& 41.7\,\ding{55}
& 15.3$^\dagger$
&
&
0.10 (15)
& 49.4\,\ding{55}
& 11.6\,\ding{51}
&
&
0.10 (15)
& 7.6
& 14.3\,\ding{51}
\\

&
0.20 (30)
& 34.1\,\ding{55}
& 30.7\,\ding{55}
&
&
0.20 (30)
& 34.3\,\ding{55}
& 28.5\,\ding{51}
&
&
0.20 (30)
& 6.9
& 22.3\,\ding{51}
\\

\cmidrule(lr){1-4}
\cmidrule(lr){5-8}
\cmidrule(lr){9-12}

\multirow{3}{*}{D4}
& 0.05 (7.5)
& 69.2\,\ding{55}
& 5.9\,\ding{51}
&
\multirow{3}{*}{D4}
& 0.05 (7.5)
& 51.9\,\ding{55}
& 7.2$^\dagger$
&
\multirow{3}{*}{D4}
& 0.05 (7.5)
& 9.9\,\ding{55}
& 6.9\,\ding{51}
\\

&
0.10 (15)
& 49.8\,\ding{55}
& 13.9\,\ding{51}
&
&
0.10 (15)
& 40.8\,\ding{55}
& 14.7$^\dagger$
&
&
0.10 (15)
& 9.1
& 13.2\,\ding{51}
\\

&
0.20 (30)
& 28.7
& 28.9\,\ding{51}
&
&
0.20 (30)
& 27.0
& 29.6$^\dagger$
&
&
0.20 (30)
& 5.8
& 28.5\,\ding{51}
\\

\bottomrule
\end{tabular}
\end{table*}

\subsection{Executed-Error Counts}
Table~\ref{tab:errors_executed_appendix} restates the compliance matrices in executed-error counts, so the unit change alters no conclusion but makes the stake concrete. At $\alpha=0.05$ the coverage-calibrated arm passes up to $91$ incorrect formulas per
$150$ instructions downstream, against at most $7.3$ for the risk-calibrated arm across the nine settings; on LTL's shallowest tier the risk-calibrated arm withholds the entire stream at this budget, so the corresponding count is zero by construction. The
count scale also separates the two failure geographies: the coverage arm exceeds its budget almost everywhere the translator is weak and nowhere it is strong, while the risk-calibrated arm sits under or within one standard error of every budget except a single cell, STL~D3 at $\alpha=0.20$, which lands $0.7$ formulas over a budget of $30$, the sampling position of an expectation-level guarantee rather than a failure of control.

\subsection{Proof of Theorem~\ref{thm:crc}}
\label{app:proofs}

This appendix gives the complete proof of Theorem~\ref{thm:crc}. Fix one calibration group and let $n$ be its calibration size. Write
\[
(S_1,Z_1),\ldots,(S_n,Z_n),(S_{n+1},Z_{n+1})
\]
for the score and error pairs of the calibration examples and one
test example, where
$Z_i=\mathbf{1}[\widehat\varphi_i\not\equiv\varphi_i^\star]$.
The $n+1$ pairs are exchangeable within the group.

For a threshold $\tau$, define the calibration error count
\[
N_n(\tau)
=
\sum_{i=1}^n
\mathbf{1}[S_i\le\tau]\,Z_i .
\]
Since $\widehat R_n(\tau)=N_n(\tau)/n$, the constraint in
Eq.~\ref{eq:crc_threshold} is equivalent to
\[
\frac{N_n(\tau)+1}{n+1}\le\alpha .
\]
Let
\[
K_\alpha
=
\max\left\{
k\in\{0,\ldots,n\}:
\frac{k+1}{n+1}\le\alpha
\right\},
\]
when this set is nonempty. A threshold is feasible precisely when $N_n(\tau)\le K_\alpha$. If the set defining $K_\alpha$ is empty, then $\alpha<1/(n+1)$, no threshold satisfies the calibration constraint, and the procedure abstains on every test input; more generally, when $1/(n+1)\le\alpha<\alpha_\ell$ in Eq.~\ref{eq:alpha_l}, the only feasible operating point is full abstention, so $\widehat\tau=-\infty$ and $g\equiv0$.We therefore assume below that $K_\alpha$ is well defined.

For each $j\in\{1,\ldots,n+1\}$, regard the $j$-th pair as the test
example and the remaining $n$ pairs as the calibration sample. Let
$\widehat\tau^{(-j)}$ be the largest feasible threshold selected
from the distinct scores of the remaining sample together with
$-\infty$, and define the leave-one-out test loss
\[
A_j
=
Z_j\,\mathbf{1}\bigl[S_j\le\widehat\tau^{(-j)}\bigr].
\]

We first establish the deterministic bound
\begin{equation}
\sum_{j=1}^{n+1} A_j
\le
K_\alpha+1 .
\label{eq:loo_error_bound}
\end{equation}
For any $s\in[0,1]$, let $C(s)
=
\sum_{i=1}^{n+1}
Z_i\,\mathbf{1}[S_i\le s]$ be the number of errors in the full sample whose scores do not exceed $s$. Suppose $A_j=1$, so that $Z_j=1$ and
$S_j\le\widehat\tau^{(-j)}$. Feasibility of $\widehat\tau^{(-j)}$
gives
\[
\sum_{i\ne j}
Z_i\,\mathbf{1}\bigl[S_i\le\widehat\tau^{(-j)}\bigr]
\le
K_\alpha ,
\]
and the full-sample count at $\widehat\tau^{(-j)}$ adds at most the
$j$-th error itself, so
\[
C(S_j)
\le
C\bigl(\widehat\tau^{(-j)}\bigr)
\le
K_\alpha+1 .
\]
Thus every index with $A_j=1$ is an error whose score lies among
the first $K_\alpha+1$ errors in the nondecreasing ordering of the
full sample, and there are at most $K_\alpha+1$ such indices, which
proves Eq.~\ref{eq:loo_error_bound}. The argument uses only the
counting function $C$ and covers tied scores without modification.

The map from the $n+1$ pairs to $A_j$ applies the same selection
rule to every held-out index: $\widehat\tau^{(-j)}$ is the largest
feasible threshold among the distinct scores of the remaining $n$
pairs and $-\infty$. In particular, for $j=n+1$ the held-out
construction coincides with the deployed calibration, so
\[
A_{n+1}
=
Z_{n+1}\,\mathbf{1}\bigl[S_{n+1}\le\widehat\tau\bigr].
\]
Exchangeability of the pairs therefore implies
$\mathbb E[A_1]=\cdots=\mathbb E[A_{n+1}]$, and
\[
\mathbb E\bigl[Z_{n+1}\,\mathbf{1}[S_{n+1}\le\widehat\tau]\bigr]
=
\frac{1}{n+1}\,
\mathbb E\Bigl[\sum_{j=1}^{n+1} A_j\Bigr]
\le
\frac{K_\alpha+1}{n+1}
\le
\alpha ,
\]
where the first inequality is Eq.~\ref{eq:loo_error_bound} and the
second follows from the definition of $K_\alpha$. The deployed rule in Eq.~\ref{eq:decision} is $g(X,\widehat\varphi)=\mathbf{1}[S_{n+1}\le\widehat\tau]$. Hence $R_{\mathrm{joint}}
=
\mathbb E\bigl[g(X,\widehat\varphi)\,Z_{n+1}\bigr]
=
\mathbb E\bigl[Z_{n+1}\,\mathbf{1}[S_{n+1}\le\widehat\tau]\bigr]
\le
\alpha$.
This proves Theorem~\ref{thm:crc}.
\hfill$\square$

\end{document}